\def\BibTeX{{\rm B\kern-.05em{\sc i\kern-.025em b}\kern-.08em
    T\kern-.1667em\lower.7ex\hbox{E}\kern-.125emX}}
\begin{document}

\title{
    \textit{Wafer Defect Root Cause Analysis with Partial Trajectory Regression}\\
    {\huge DM: Big Data Management and Machine Learning}
}

\author{\IEEEauthorblockN{Kohei Miyaguchi$^*$\thanks{$^*$ Kohei Miyaguchi is currently affiliated with LY Research, Japan.}}
\IEEEauthorblockA{\textit{IBM Research -- Tokyo} \\
Tokyo, Japan \\
koheimiyaguchi@gmail.com}
\and
\IEEEauthorblockN{Masao Joko}
\IEEEauthorblockA{\textit{IBM Semiconductors} \\
Tokyo, Japan \\
mjoko@jp.ibm.com}
\and
\IEEEauthorblockN{Rebekah Sheraw}
\IEEEauthorblockA{\textit{IBM Semiconductors} \\
Albany, NY, USA \\
rebekah.sheraw@ibm.com}
\and
\IEEEauthorblockN{Tsuyoshi Id\'e}
\IEEEauthorblockA{\textit{IBM Semiconductors} \\
Yorktown Heights, NY, USA \\
tide@us.ibm.com}
}

\maketitle

\begin{abstract}
Identifying upstream processes responsible for wafer defects is challenging due to the combinatorial nature of process flows and the inherent variability in processing routes, which arises from factors such as rework operations and random process waiting times. 
This paper presents a novel framework for wafer defect root cause analysis, called Partial Trajectory Regression (PTR). The proposed framework is carefully designed to address the limitations of conventional vector-based regression models, particularly in handling variable-length processing routes that span a large number of heterogeneous physical processes. 
To compute the attribution score of each process given a detected high defect density on a specific wafer, we propose a new algorithm that compares two counterfactual outcomes derived from partial process trajectories. This is enabled by new representation learning methods, proc2vec and route2vec. 
We demonstrate the effectiveness of the proposed framework using real wafer history data from the NY CREATES fab in Albany.
\end{abstract}

\section{Introduction}

Identifying the root causes of wafer defects is one of the most critical yet challenging tasks in semiconductor manufacturing. Several unique factors contribute to this complexity, including the intricate dependencies between processes, the heterogeneity of individual operations and recipes, and the dynamic variations in processing routes.

Particularly during process integration and yield ramp-up stages, classical design-of-experiment methodologies, which analyze process outcomes across systematically varied parameters, are often impractical due to too many adjustable parameters. Moreover, although a wide range of off-the-shelf machine learning (ML) tools are publicly available, most of these tools are designed for prediction tasks, such as estimating real-valued outputs (regression) or categorical outcomes (classification). As a result, these tools are poorly suited to the task of root cause analysis, and fab-wide defect diagnosis still heavily depends on manual, ad hoc analysis by domain experts.

\begin{figure}[t]
    \centering
    \includegraphics[clip, trim=6cm 6cm 8cm 4cm, width=\linewidth]{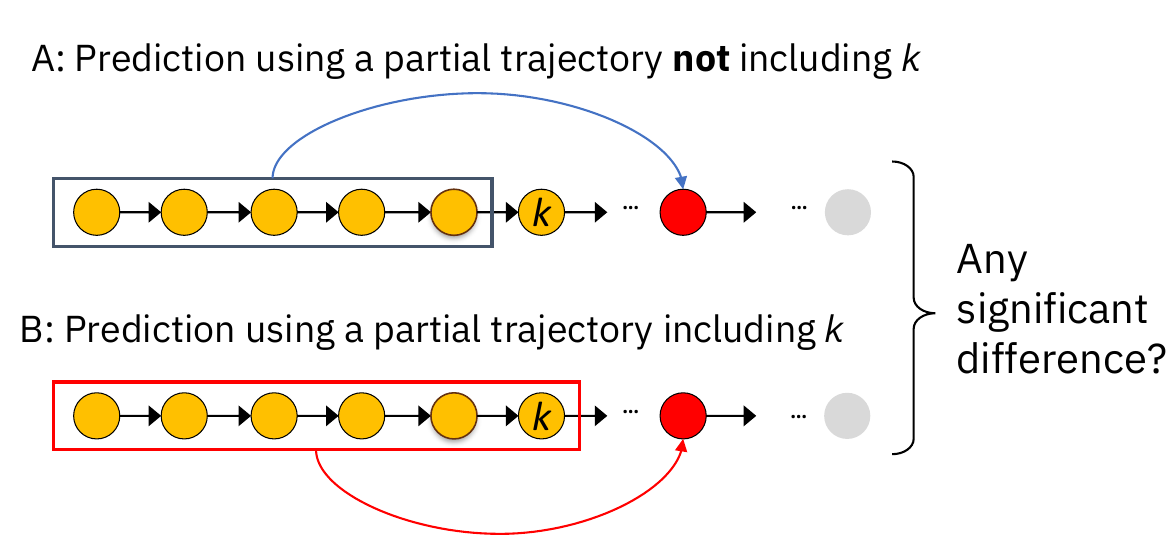}
    \caption{Causal analysis framework with partial trajectory regression (PTR).}
    \label{fig:PTR_concept}
\end{figure}

To enable more systematic wafer defect root-cause analysis, two major directions have been pursued in the literature in semiconductor analytics to date. The first approach treats defect diagnosis as a by-product of cross-process virtual metrology (VM) modeling. Regularized linear regression combined with variable selection techniques is commonly used (e.g.,~\cite{susto2015multi,jebri2016virtual,kim2018variable}). However, linear models struggle to capture complex nonlinear relationships across heterogeneous fabrication processes. Furthermore, defect attribution based on linear models is essentially reduced to variable-wise correlation analysis, which is known to yield only weak attribution signals~\cite{Miyaguchi25ASMC}. While recurrent neural networks (RNNs) and Transformers can capture complex nonlinear dependencies in sequential processes (e.g.,~\cite{yella2021soft,han2023deep,dalla2023deep,lee2020recurrent,hsu2023virtual}), they operate as black boxes, making input attribution a non-trivial task.

The second approach involves leveraging explainable artificial intelligence (XAI) techniques applicable to black-box prediction models. This may be a promising direction in that it can potentially enhance expressive prediction models with interpretability, helping to identify which process steps should be adjusted to improve defect rates. However, for cross-process defect attribution, existing methods such as those based on Shapley values (e.g.,~\cite{torres2020machine,senoner2022using,lee2023expandable,guo2024enhanced}) fail to reflect key characteristics of semiconductor processes. Limitations include a disregard for the sequential nature of fabrication and dependence on arbitrarily selected baseline inputs. Ironically, these methods attempt to explain black-box models by introducing another layer of black-box reasoning.

Those two technical directions highlight two main technical tasks in data-driven root-cause analysis. The first task is to predict process outcomes (e.g.,~defect density), given a sequence of physical processes. The second task is to provide an explanation to an observed pair of the process input and its outcome. We refer the first and second tasks the forward problem (prediction) and inverse problems (attribution), respectively. The main technical difficulty of wafer defect root-cause analysis is that these do not fit the standard problem settings of machine learning. 

Specifically, \textit{first},  process routes vary in length and tools used, making it difficult to represent inputs as fixed-dimensional vectors---a key assumption of many algorithms like random forests or gradient boosting. Related to this point, \textit{second}, each process step may involve completely different physical and chemical mechanisms, making it extremely difficult to extract features that are comparable across steps.  \textit{Third},  most off-the-shelf prediction and attribution tools ignore sequential structure; for example, linear regression and Shapley values assume unordered feature vectors.  \textit{Finally}, many XAI algorithms make assumptions that may not be appropriate in semiconductor manufacturing. Examples include the dependency on baseline inputs of the Shapley values.

To overcome these significant challenges, we propose a novel framework called the partially trajectory regression (PTR) for cross-process defect root-cause analysis. We formalize the prediction task as trajectory regression~\cite{ide2009travel,ide2011trajectory}, rather than conventional vector-based regression, to accommodate variable-length sequences. To handle process heterogeneity along processing routes, we introduce \texttt{proc2vec} and \texttt{route2vec}, two new representation learning approaches for encoding process trajectories, designed for effective learning even under limited sample sizes. Finally, as illustrated in Fig.~\ref{fig:PTR_concept}, we propose a novel attribution algorithm based on the new idea of comparing the outcomes between two \textit{partial} trajectories---one including the target process and the other excluding it.  We demonstrate its effectiveness on real wafer history data for the state-of-the-art FEOL (front end-of-line) process from the NY CREATES fab in Albany.

\section{Related Work}

There is a growing body of research on data-driven root cause analysis (RCA) methods in manufacturing. In a recent systematic review, Pietsch et al.~\cite{pietsch2024root} argue that traditional knowledge-based approaches have suffered from challenges in systematically acquiring domain knowledge. While data-driven approaches effectively mitigate the knowledge acquisition bottleneck, they still face methodological limitations, particularly in evaluating the causal relevance of variables in high-dimensional settings. This work aims to address this critical issue in data-driven RCA.

\subsection{Handling Process Heterogeneity}

A key characteristic that distinguishes semiconductor manufacturing from other industrial domains is its process complexity. A typical semiconductor process involves hundreds of intricate and highly specialized physical operations, including photolithography, thermal annealing, polishing, wet and dry etching, ion implantation, electroplating, sputtering, and chemical vapor deposition, among others.

For cross-process RCA, these heterogeneous operations must be mapped to a shared representation space to enable meaningful comparisons. Existing literature offers three general approaches for this. 
The first approach utilizes process trace data~\cite{xu2024fast,fan2022key}, such as temperature and pressure recorded from processing chambers. While effective within individual process tools, this method requires extensive tool-specific preprocessing, and the quality of analysis heavily depends on the chosen preprocessing strategy, making it less suitable for cross-process analysis. 
The second approach uses inline measurements as proxies for physical processes. Since these measurements partially absorb the physical heterogeneity across processes, this method has become common practice in recent studies~\cite{senoner2022using,guo2024enhanced,wang2024improved,ni2025novel}. However, these approaches typically disregard the sequential order of processes and perform RCA as a by-product of virtual metrology (VM), often via univariate correlation analysis, which is known to yield only weak attribution signals~\cite{Miyaguchi25ASMC}.

The third approach involves embedding techniques, where data objects (e.g., process steps) are transformed into numerical vector representations. This idea, widely adopted in natural language processing (NLP), such as in Word2Vec~\cite{mikolov2013efficient} or token embeddings in Transformer models~\cite{vaswani2017attention}, treats individual processes as tokens and process sequences as sentences. Embedding allows process similarity to be learned from data, enabling richer representations. 
Recently, embedding techniques have been introduced in the semiconductor context. For instance, Fan et al.~\cite{fan2022data} use one-hot encoding to unify categorical and numerical data in the VM setting. Schulz et al.~\cite{schulz2022graph} propose defining a fab state vector using known interdependencies among processing tools under an unsupervised setting, without the context of wafer defect analysis. 

The proposed \texttt{proc2vec} and \texttt{route2vec} fundamentally differ from those recent works in that they incorporate the similarity between related tools and recipes via the kernel function. This is critical in process integration stage since many similar but different recipes are tested with multiple tools and chambers of the same type. Naive one-hot encoding amounts to treating them as independent objects, which drastically deteriorates sample efficiency.

\subsection{Explainable AI (XAI)}

Numerous methods have been developed to improve the interpretability of machine learning models under the umbrella of explainable AI (XAI)~\cite{xu2019explainable}. One widely used category is additive explanation methods~\cite{lundberg2017unified,ribeiro2016why}, which provide mathematically justified decompositions of a model's output into individual contributions of input variables. Other common XAI techniques include gradient-based methods~\cite{selvaraju2020grad} and attention-based methods~\cite{ali2022xai}. 

Several XAI methods have been proposed for sequential input variables, such as wafer processing routes~\cite{rojat2021explainable}. The integration of model-agnostic XAI techniques with advanced quality prediction models has become a major research trend in semiconductor analytics, aiming to balance predictive power with interpretability. Among various XAI methods (see, e.g.,~\cite{molnar2020interpretable} for an overview), the Shapley value (SV) is the most widely used in semiconductor analytics~\cite{torres2020machine,senoner2022using,lee2023expandable,guo2024enhanced}.

Ironically, despite its widespread adoption, most studies apply XAI methods in a black-box fashion, without scrutinizing their modeling assumptions. In fact, SV disregards both the sequential nature of semiconductor processes and the wafer-lot hierarchy. Additionally, mainstream attribution algorithms such as the Shapley values, integrated gradients, and local linear surrogate models are deviation agnostic~\cite{ide2023generative}, meaning they do not account for how much a given input deviates from a normal or baseline process state, making them less suitable for defect diagnosis tasks.

\section{Problem Setting}

As discussed, there are two major machine learning tasks towards wafer defect root cause analysis. The first task is to learn a prediction model for the occurrence of a certain defect type, while the second task is to establish a scoring model for defect attribution, given the learned prediction model. 

\textbf{Prediction modeling}: 
We formalize the prediction task as trajectory regression for defect density. Specifically, we assume to have a training dataset including $N$ pairs of a defect density value $y$ and a process trajectory $\xi$: 
\begin{align}\label{eq:data}
    \mathcal{D}_\text{train}\coloneq \{(y^{(n)},\xi^{(n)})\}_{n=1}^N,
\end{align}
where the superscript $^{(n)}$ denotes that the quantity belongs to the $n$-th wafer. A process trajectory $\xi$ is a sequence of process attributes and timestamps as 
\begin{align}\label{eq:trajectory_as_(x,t)sequence}
    \xi=\left( (\bmx_1,t_1), \ldots, (\bmx_L,t_L) \right),
\end{align}
where $L$ is the number of process steps included in $\xi$. The goal of the prediction task is to find a function $f(\xi)$ such that it gives a prediction of defect density for a given process trajectory $\xi$ that is not included in $\calD_\text{train}$ in general. 

\textbf{Attribution modeling}: 
On the other hand, the attribution task is to find a function $\alpha_k(\xi)$, which computes the responsibility score for the $k$-th process on the process trajectory $\xi$, possibly making use of a pre-trained prediction function $f(\xi)$.

\textbf{Considerations in modeling}: 
Before delving into the technical details, we highlight the unique characteristics of semiconductor analytics. Although Eq.~\eqref{eq:data} might have given the impression that the dataset comprises independent samples, wafers are not generally independent due to lot- and batch-based processing. Additionally, in the process integration and yield ramp-up stage, multiple processing recipes are tested simultaneously. Hence, some trajectories can be quite similar except for a few processes. 

Although the nominal diversity of processing routes can be extremely large, the \textit{effective sample size}---defined as the number of wafers per unique processing or measurement condition---can be very small, potentially even as low as one. Nonetheless, the net variety of trajectories may be much smaller than their nominal diversity. Thus, leveraging similarities between tools and recipes is key to successful modeling. The \texttt{proc2vec} embedding approach described in the next section is designed to achieve that goal.

\section{Process Embedding}

The proposed partial trajectory regression (PTR) framework covers both prediction and attribution problems. The embedding module featuring \texttt{proc2vec} is used by the prediction and attribution function as a common building block. 

Now, let us take a look at how it works. The \texttt{proc2vec} algorithm generates a vector representation for each process using a kernel embedding technique. In our setting, we assume that high-level process attributes, such as process ID and recipe ID are available from the manufacturing execution system (MES) but detailed tool traces are unavailable at least in some tools. hence, for cross-process attribution, we use only high-level attributes as the first attempt. 

First, we pull common attributes out of the MES across different processes, such as equipment IDs, recipe IDs, tool types, photo layer IDs, route IDs, and others. We then create a single synthetic `token' for each process by concatenating those strings as:
\begin{align}\nonumber
    (\text{process token})=&\texttt{eqp}\oplus
    \texttt{recipe}\oplus
    \texttt{tool\_type}\oplus\\
    &\texttt{photo\_layer}\oplus
    \texttt{route}\oplus \cdots,
\end{align}
where $\oplus$ denotes concatenation with a suitable separator.

Considering the challenges of low effective sample size, we employ a relatively simple embedding approach. Specifically, based on the string representation, we first construct a dictionary of the tokens. Let $V_d$ is the size of the vocabulary, i.e.,~the number of unique tokens. For the dictionary, we compute the kernel matrix $\sfK \in \mathbb{R}^{V_d \times V_d}$, where $K_{i,j}$ is the similarity between token $i$ and token $j$.  

Our preliminary analysis indicated that a custom variant of the substring kernel~\cite{lodhi2002text,shawe2004kernel} effectively captured the clustering structure of processes. Once the kernel matrix $\sfK$ is computed, the vector representation of token $i$ is obtained as:
\begin{align}\label{eq:embedding_MDS}
    \bmx_i = (\sqrt{\lambda}_1v_i^{(1)}, \ldots, \sqrt{\lambda}_kv_i^{(k)},\ldots, \sqrt{\lambda}_D v_i^{(D)})^\top,
\end{align}
where $\lambda_k$ is the $k$-th largest eigenvalue of $\sfK$, and $v_i^{(k)}$ is the $i$-th element of the eigenvector corresponding to $\lambda_k$. $D$ is the embedding dimensionality as a hyperparameter.

\section{Trajectory Regression}

Now that processes have been converted into $D$-dimensional vectors, we consider developing a predictive model for defect density $y$ as a function of the sequence of process vectors. This remains a non-trivial task due to the variable length of trajectories, making it an instance of \textit{trajectory regression}, distinct from standard regression problems.

In trajectory regression modeling, a module called the \texttt{route2vec} plays the key role. It converts the entire trajectory, represented as an ordered set of \texttt{proc2vec} vectors, into another vector representation. We achieve this by introducing a constrained version of recurrent neural network (C-RNN), where each cell operates as:
\begin{align}
    \bmz_k = \text{Cell}(\bmz_{k-1},\bmx_k,t_k), 
\end{align}
where $\bmz_k$ is the vector representation of the \textit{partial trajectory} up to the $k$-th process from the first. Hence, the vector representation of the full trajectory of $\xi^{(n)}$ is obtained as $\bmz^{(n)} \coloneq \bmz_{L^{(n)}}^{(n)}$. We refer to this mapping from a (partial or full) trajectory to its corresponding vector representation $\bmz_k$ ($k=1,\ldots, L)$ as \texttt{route2vec}.

Our preliminary experiments showed that off-the-shelf RNN models did not yield promising results, as expected due to the small effective sample size. Instead, we propose a custom cell architecture:
\begin{align}\label{eq:cell_linear}
    \bmz_k =\psi(t_k,t_{k-1})\bmx_k + \bmz_{k-1}, 
\end{align}
where $\psi(\cdot,\cdot)$ is a temporal mapping function, set to $\log_{10}(1+\cdot)$ in our evaluation. Extending Eq.~\eqref{eq:cell_linear} to more expressive models such as multi-layer perceptrons (MLP) is straightforward; however, we leave further analysis for future work.

The regression module defines a function to predict $y$ from~$\xi$. Since \texttt{route2vec} converts trajectories of varying lengths into fixed-dimension vectors, the prediction function can be formulated as:
\begin{align}\label{eq:regression_MLP}
    f(\bmz_k)=\text{MLP}(\bmz_k),
\end{align}
where MLP denotes a multi-layer perceptron. We deliberately abused the notation for $f(\cdot)$ so that it admits the vector representation of not only a full trajectory but also partial trajectories. For the specific implementation, we used a fully connected network without hidden layers, defining a linear prediction function, in our empirical evaluation. The function $f(\cdot)$ is trained to minimize a properly regularized squared loss, such as
\begin{align}
    L\coloneq \frac{1}{2N}\sum_{n=1}^N\sum_{k=1}^{L^{(n)}}\frac{1}{L^{(n)}}\left(y^{(n)}- f(\bmz_k)\right)^2 + \nu \|\bmtheta\|_1,
\end{align}
where $\bmtheta$ is the model parameters in $f(\cdot)$ and $\|\cdot\|_1$ is the $\ell_1$-norm. $\nu$ is the regularization strength and is treated as a hyperparameter. The inner summation represents data augmentation for numerical stability. The details on training are left to a longer version of the paper in preparation.

\section{Attribution Using Partial Trajectory Regression}

One of the most important features of the PTR framework is its ability to admit \textit{partial} trajectories as input. To quantify the influence of the $k$-th process, we ask: Is there a significant difference in the prediction outcome between the partial trajectories $\bmz_k$ and $\bmz_{k-1}$? This comparison between two \textit{counterfactual} inputs measures the potential outcome when including process $k$.

Following Rubin's potential outcome framework~\cite{rubin2005causal}, the attribution module quantifies the \textit{causal intervention} of process $k$ as:
\begin{align}
    \alpha_k(\xi)=
        f(\bmz_k)-f(\bmz_{k-1}).
\end{align}
This defines the attribution score of the $k$-th process in $\xi$. The PTR attribution score satisfies the following additive property:

\begin{theorem}[Additive Property of PTR]
The PTR attribution score satisfies the additive property:
\begin{align}
    \sum_{l=1}^k\alpha_{l} = f(\bmz_k)-  f(\bmz_0).
\end{align}
\end{theorem}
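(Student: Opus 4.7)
The plan is to prove the identity by recognizing the sum as a telescoping series, which is the most natural approach given the definition of $\alpha_l$ as a pairwise difference of prediction values on consecutive partial trajectories. Since $\alpha_l(\xi) = f(\bmz_l) - f(\bmz_{l-1})$ holds directly from the definition introduced just above the theorem, no additional modeling assumptions on $f$ or on the \texttt{route2vec} cell are needed; the result is purely algebraic.

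First, I would substitute the definition of $\alpha_l$ into the summation, yielding
\begin{align}
    \sum_{l=1}^k \alpha_l = \sum_{l=1}^k \bigl( f(\bmz_l) - f(\bmz_{l-1}) \bigr).
\end{align}
Next, I would expand the right-hand side explicitly for small $l$ to make the cancellation pattern visible, then invoke the standard telescoping argument: every intermediate term $f(\bmz_l)$ for $1 \le l \le k-1$ appears once with a positive sign (from the $l$-th summand) and once with a negative sign (from the $(l+1)$-th summand), so all interior terms vanish. What survives is the last positive term $f(\bmz_k)$ and the first negative term $-f(\bmz_0)$, giving exactly the claimed expression.

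One step that deserves a brief clarification, rather than being a true obstacle, is the meaning of $\bmz_0$. Because the recurrence in Eq.~\eqref{eq:cell_linear} is indexed starting from $k=1$, the base state $\bmz_0$ must be fixed as a convention (for instance, the zero vector, representing the empty partial trajectory before any process has been applied). I would state this convention explicitly at the beginning of the proof so that $f(\bmz_0)$ is well-defined and the telescoping identity is unambiguous. Once that convention is in place, the proof is a one-line telescoping argument and no further calculation is required.

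Because the argument is this transparent, I do not expect any genuine obstacle. The only subtlety worth flagging in the write-up is interpretive: the additive property shows that the PTR attribution scores exactly decompose the prediction shift along the trajectory, which contrasts with Shapley-value-style attributions that only sum to the model output under specific baseline conventions. I would close the proof by remarking on this connection to motivate why the identity is structurally meaningful rather than merely formal.
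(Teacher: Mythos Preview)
Your proposal is correct and matches the paper's own proof: both substitute $\alpha_l = f(\bmz_l) - f(\bmz_{l-1})$ and observe that the resulting sum telescopes to $f(\bmz_k) - f(\bmz_0)$. The additional remarks you plan to include about the convention for $\bmz_0$ and the contrast with Shapley-style attributions are fine as contextual commentary but are not needed for the proof itself.
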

\begin{proof}
    This follows directly from summing individual contributions:
\begin{align*}
\sum_{l=1}^k\alpha_{l} 
&= [f(\bmx_k)-  f(\bmz_{k-1})] + [f(\bmz_{k-1})-  f(\bmz_{k-2})] \\
&+ \cdots +[f(\bmz_1)-  f(\bmz_0)] = f(\bmz_k)- f(\bmz_0).
\end{align*}
\end{proof}
This additive property significantly enhances the interpretability of the attribution score, as demonstrated in the cumulative attribution plots in the next section.

Figure~\ref{fig:PTR_architecture} illustrates the overall architecture of the PTR framework. While we have presented a simple version of the model with the embedding (Eq.~\eqref{eq:embedding_MDS}) and the recurrent layer (Eq.~\eqref{eq:cell_linear}) fixed and `frozen' during the training, extending to a more general architecture is straightforward. Further analysis along this direction is left for future work. 

\begin{figure}[tb]
    \centering
    \includegraphics[clip, trim=6cm 2cm 6cm 2cm, width=\linewidth]{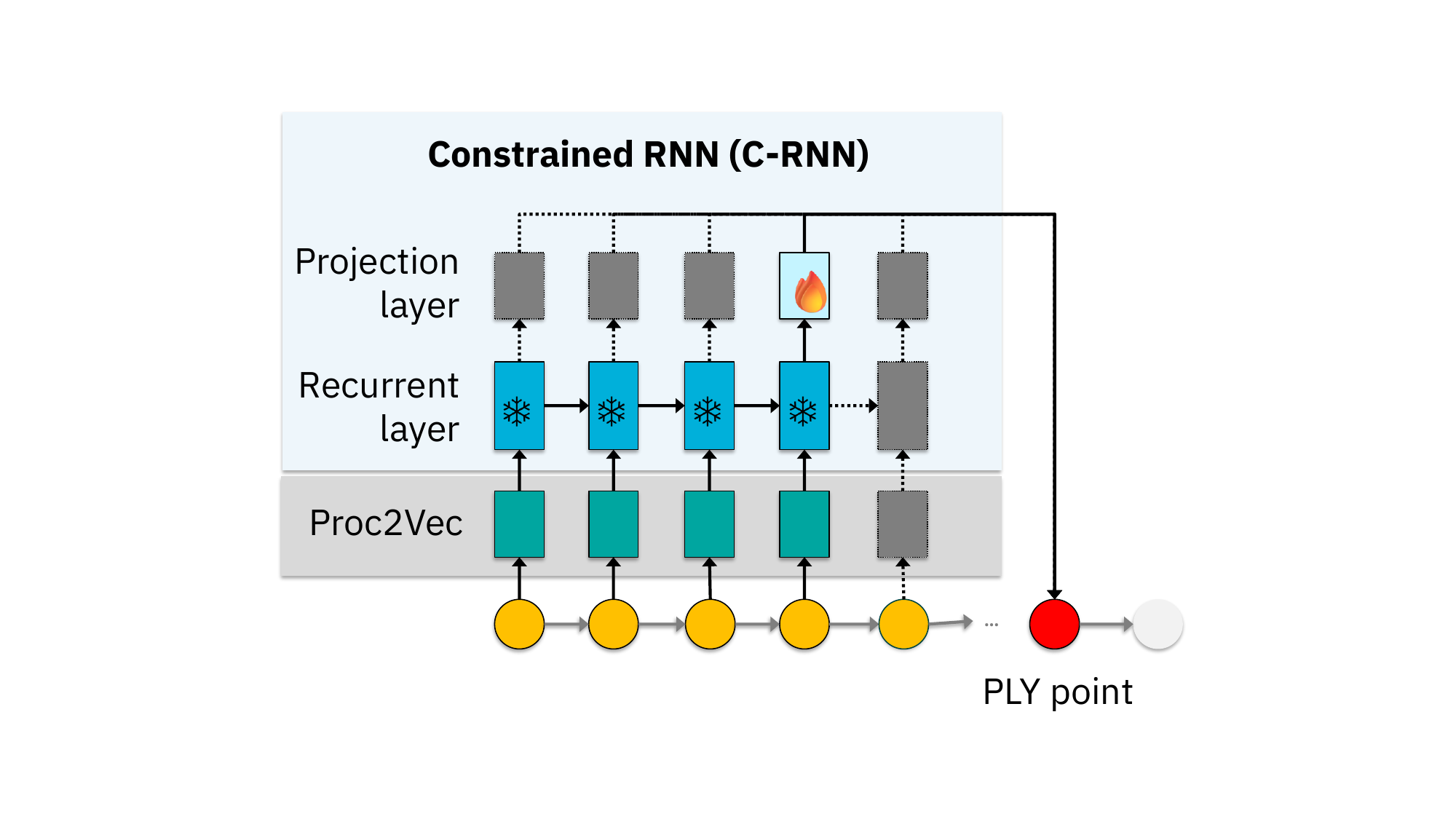}
    \caption{System architecture of the PTR framework. Only the projection layer is learnable in this architecture to ensure numerical stability under limited sample sizes.}
    \label{fig:PTR_architecture}
\end{figure}

\section{Empirical Evaluation}

We applied the PTR framework to conduct a root cause analysis (RCA) for a specific defect type in a state-of-the-art front-end-of-line (FEOL) process at the NY CREATES Albany NanoTech fab. The dataset consisted of process histories for $N=787$ wafers, covering hundreds of processes, along with corresponding defect density measurements obtained at a process-limited yield (PLY) evaluation point.

First, we validated the embedding module by comparing three scenarios: 1) $\bmx_i = 1$ (a constant value), 2) $\bmx_i$ represented as a one-hot vector, and 3) the proposed kernel embedding approach. The correlation coefficients with defect density $y$ were 0.27, 0.52, and 0.61, respectively, demonstrating the effectiveness of the kernel embedding technique. 

Figure~\ref{fig:MDS} shows the distribution of the resulting embeddings, where distances between processes reflect the similarity of tools and recipes. The visualization was performed with \texttt{scikit-learn}'s implementation of $t$-SNE ($t$-distributed stochastic neighbor embedding)~\cite{van2008visualizing}. The figure exhibits a clear clustering structure according to equipment types, which suggests that the string kernel approach effectively captures inter-process similarities. 

\begin{figure}
    \centering
    \includegraphics[clip, trim=0cm 3.cm 0cm 3.cm, width=\linewidth]{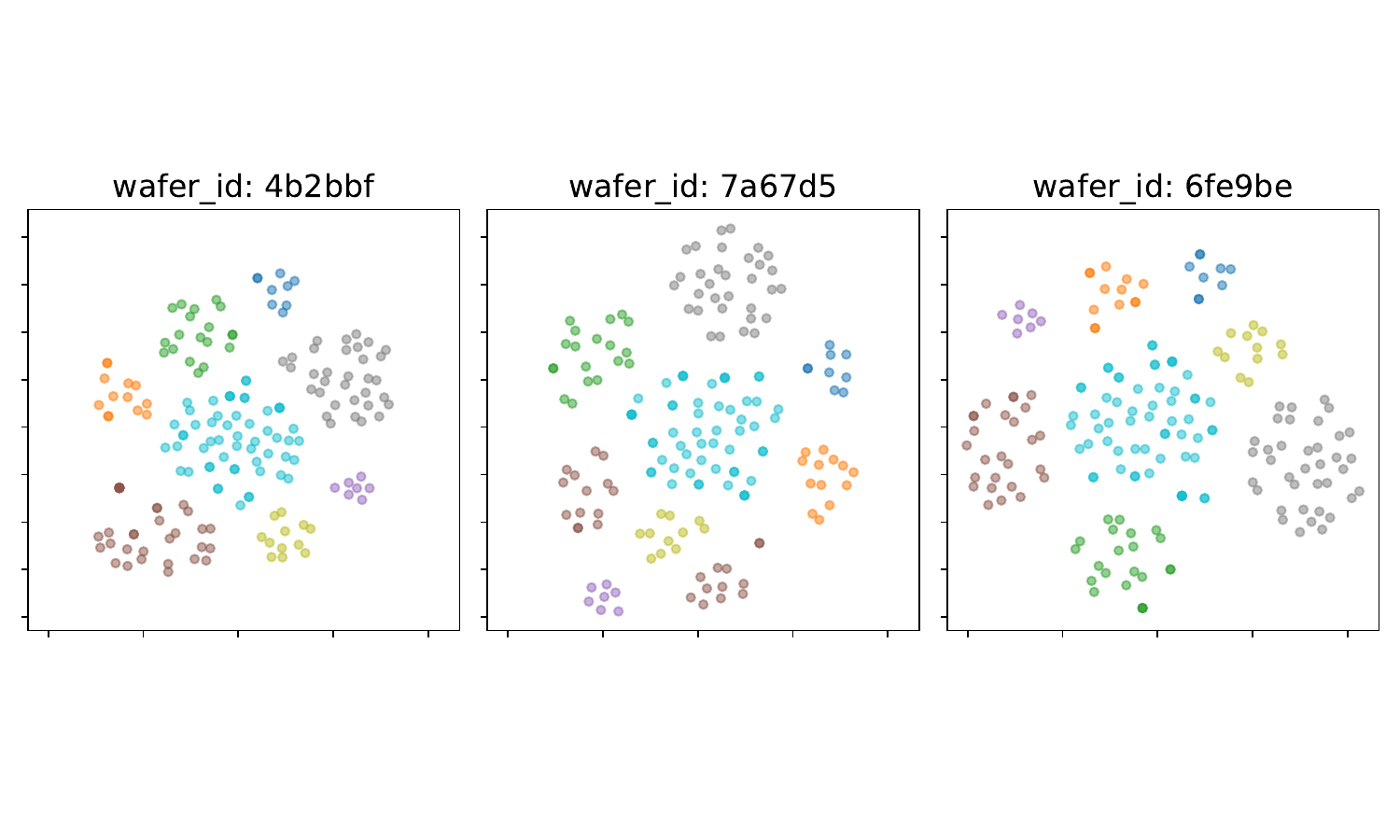}
    \caption{Distribution of the \texttt{proc2vec} embeddings for three wafer samples, visualized with $t$-SNE (the perplexity is set to be 30). The color coding corresponds to distinction among the equipment types including Wet Process, Rapid Thermal Processing, Inspection, Lithography, Reactive Ion Etching, Ion Implantation, Furnace, Chemical Mechanical Polishing, although the exact mapping as well as axis scales are undisclosed.}
    \label{fig:MDS}
\end{figure}

Next, we evaluated the regression module. While the linear model used in Eq.~\eqref{eq:regression_MLP} provided moderate predictive performance, the correlation coefficient improved to 0.87 when incorporating additional process features and optimizing the prediction model. Further details are omitted here for brevity.

Finally, Fig.~\ref{fig:CT_scoring} presents the cumulative attribution score, which plots 
\begin{align}
    \sum_{i=1}^k \alpha_i + f(\bmz_0)
\end{align} 
for a specific wafer from the held-out dataset at each timestamp $\tau$, where $t_k \leq \tau$. The plot illustrates the progression from $f(\bmz_0)$ (intercept) to $f(\bmz)$, the final prediction based on the wafer’s complete trajectory. Notable jumps, labeled A and B, are highlighted in the plot. Upon further inspection, these jumps correspond to unusually long waiting times at certain tools, suggesting potential root causes of high defect density. This example demonstrates how PTR effectively identifies problematic processes, providing actionable insights for RCA.

\begin{figure}
    \centering
    \includegraphics[clip, trim=2.1cm 1.1cm 2.cm 2cm, width=0.48\textwidth]{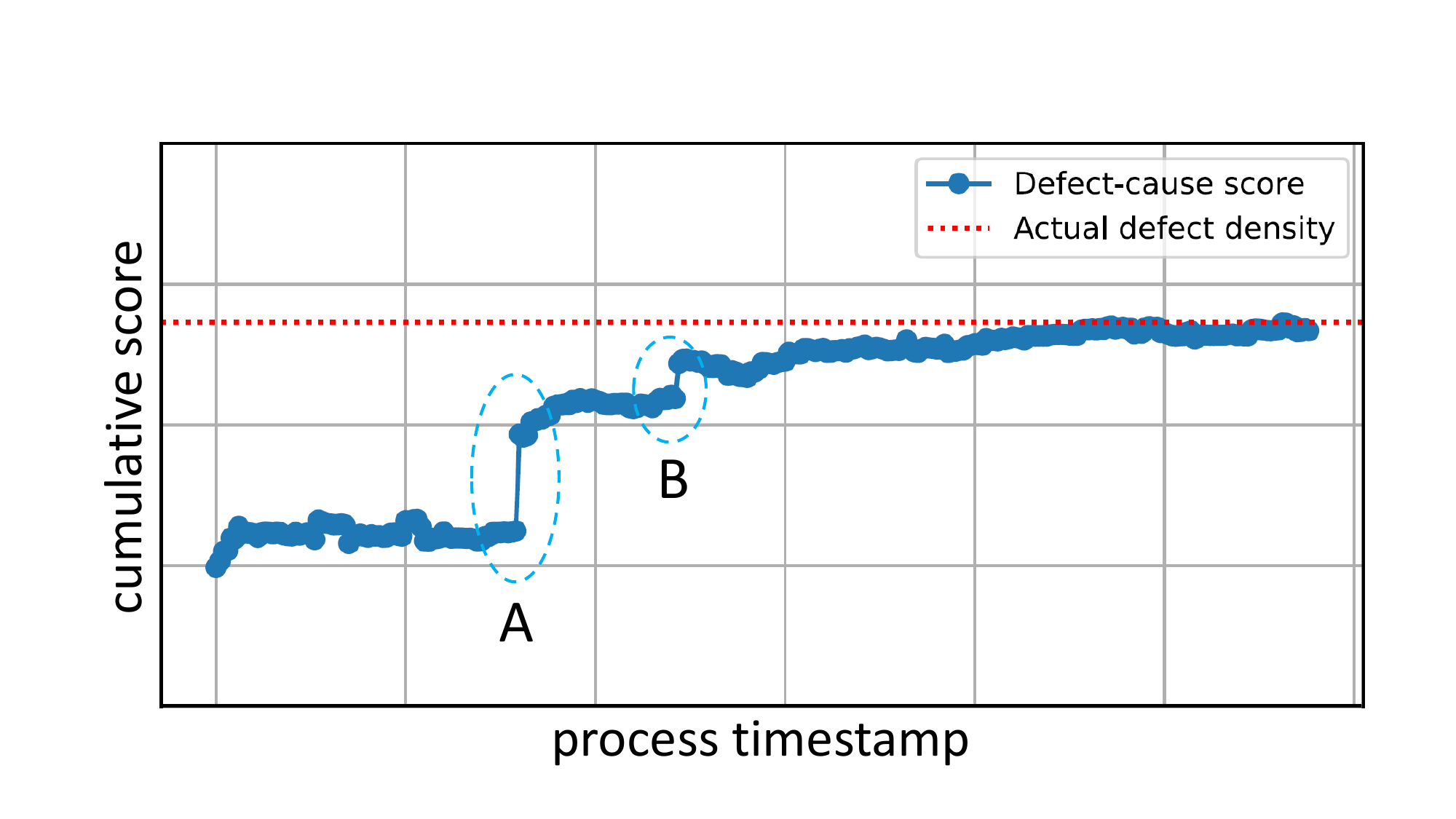}
    \caption{Cumulative attribution score (``defect-cause score'') for a specific wafer with high defect density, highlighting potential problematic processes at points A and B.}
    \label{fig:CT_scoring}
\end{figure}

\section{Conclusion}

We have proposed a novel data-driven framework for root cause analysis, termed Partial Trajectory Regression (PTR).

The PTR framework consists of three key modules: Embedding, Regression, and Attribution. In the embedding module, we introduced proc2vec, a novel process representation approach based on string-level similarity. For the regression module, we proposed a unique trajectory regression framework that enables prediction from partial process trajectories. In the attribution module, we demonstrated that the proposed attribution scoring model satisfies the additive property and allows interpretations based on Rubin's causal model. 

Empirical evaluation of PTR on a state-of-the-art FEOL process at the NY CREATES Albany NanoTech fab demonstrated its effectiveness in identifying potential sources of wafer defects, providing actionable insights for process optimization.

\section*{Acknowledgement}

The authors gratefully acknowledge the support of NY CREATES and the Albany NanoTech Complex for providing access to state-of-the-art fabrication and characterization resources. They also extend their gratitude to Dr.~Monirul Islam and Dr.~Ishtiaq Ahsan for providing the PLY data and their valuable support throughout the project.

\bibliographystyle{ieeetr}
\bibliography{references}

\end{document}